%% file: anonymous-submission-latex-2026.tex
\documentclass[letterpaper]{article} 
\usepackage{aaai2026}  
\usepackage{times}  
\usepackage{helvet}  
\usepackage{courier}  
\usepackage[hyphens]{url}  
\usepackage{graphicx} 
\usepackage{natbib}  
\usepackage{caption} 
\usepackage{algorithm}
\usepackage{algorithmic}
\usepackage{amsthm}
\usepackage{xspace}

\usepackage{amsmath}
\usepackage{amssymb}
\usepackage{subcaption}
\usepackage[export]{adjustbox} 
\usepackage{xcolor}

\newtheorem{theorem}{Theorem}[section]
\newtheorem{lemma}{Lemma}[section]



\newcommand{\robot}[1]{R^{#1}}

\newcommand{\robcfg}[2]{q^{#1}_{#2}}
\newcommand{\robcfgvel}[2]{\dot{q}^{#1}_{#2}}

\newcommand{\robcfgspace}[1]{\mathcal{Q}_{\robot{}}^{#1}}
\newcommand{\robtraj}[2]{\tau^{#1}_{#2}}

\newcommand{\omcbsa}{$\Omega$\textsc{-CBSA}\xspace}
\newcommand{\omastar}{$\Omega$\text{-A}$\!^\star$\xspace}

\newcommand{\omastarsingle}{$\Omega$\text{-A}$\!^\star_\text{  single}$\xspace}

\newif\ifshowedits
\showeditstrue
\newcommand{\edit}[1]{\ifshowedits\textcolor{black}{#1}\else#1\fi}

\usepackage{newfloat}
\usepackage{listings}
\DeclareCaptionStyle{ruled}{labelfont=normalfont,labelsep=colon,strut=off} 
\floatstyle{ruled}
\newfloat{listing}{tb}{lst}{}
\floatname{listing}{Listing}
\title{Embodying Multi-Hand Manipulation Policies by Searching the Assignment and Null Spaces}
\author {
    \edit{Yorai Shaoul,
    Jiaoyang Li,
    Maxim Likhachev}
}
\affiliations {
    \edit{The Robotics Institute, Carnegie Mellon University}
}

\usepackage{bibentry}

\begin{document}

\maketitle
\begin{abstract}

Learned manipulation policies increasingly predict motions for abstract “hands” and are attractive in practice because they rely on easily collected demonstrations and transfer across robot platforms. Executing these trajectories on multi-arm robots, however, is not trivial. Multi-hand policy outputs must be assigned to physical arms, each arm must realize a configuration-space motion that tracks its prescribed end-effector trajectory, and all arms must respect kinematic limits and avoid collisions. In the absence of algorithms that directly address this problem, practitioners typically extend single-arm inverse-kinematics (IK) pipelines in an ad hoc way, with no guarantees of feasibility or safety. In this work, we close this execution gap with a search-based framework that is theoretically complete for grounding policy-generated multi-hand trajectories onto physical multi-arm systems. Building on Conflict-Based Search, our method explicitly searches over both the discrete assignment of trajectories to arms and the continuous Jacobian null spaces of redundant manipulators, using redundancy to avoid inter-arm collisions while tracking the prescribed motions. This unified treatment of assignment and null-space motion yields a practically efficient planner that safely realizes coordinated manipulation-policy outputs on multi-arm robots. \edit{See omcbsa.github.io for more.}
\end{abstract}


\input{LaTeX/introduction}
\input{LaTeX/background}

\input{LaTeX/method}
\input{LaTeX/experiments}
\input{LaTeX/conclusion}
\bibliography{aaai2026}


\end{document}

%% file: LaTeX/introduction.tex
\vspace{-0.2cm}
\section{Introduction}

\edit{Modern manipulation policies increasingly output end-effector motions rather than joint commands, making demonstrations easy to collect and enabling transfer across robot platforms \cite{black2024pizero, chi2024umi, chi2023diffusionpolicy}.} This abstraction works well for single-arm systems, where differential IK can reliably track the policy’s motions. In multi-arm settings, however, it introduces an execution gap: policies do not specify which arm should execute each motion, nor how several manipulators should follow prescribed paths without colliding. As a result, standard single-arm strategies fail even when multi-arm realizations exist.


We close this gap by casting multi-arm policy embodiment as a search over assignments and null-space motions, ensuring that each arm tracks a prescribed trajectory while avoiding inter-arm collisions. Succinctly, we contribute \edit{(1) an efficient planner, with two effective variants, for safely realizing multi-hand trajectories on multi-arm systems; (2) a theoretical analysis establishing resolution-completeness over the search space induced by motion primitives; and (3) a physical and simulated experimental study.}

\begin{figure}[t]
    \centering
    \hfill
    \begin{minipage}{0.24\linewidth}
        \includegraphics[width=\linewidth]{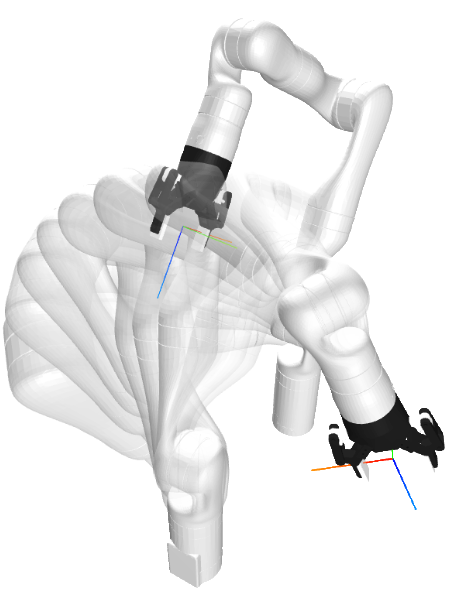}
    \end{minipage}
    \hfill
    \begin{minipage}{0.27\linewidth}
        \includegraphics[width=\linewidth]{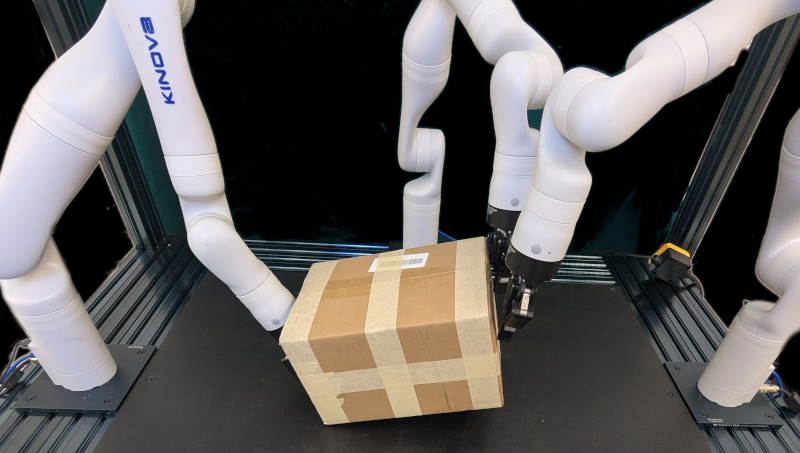}
        \includegraphics[width=\linewidth]{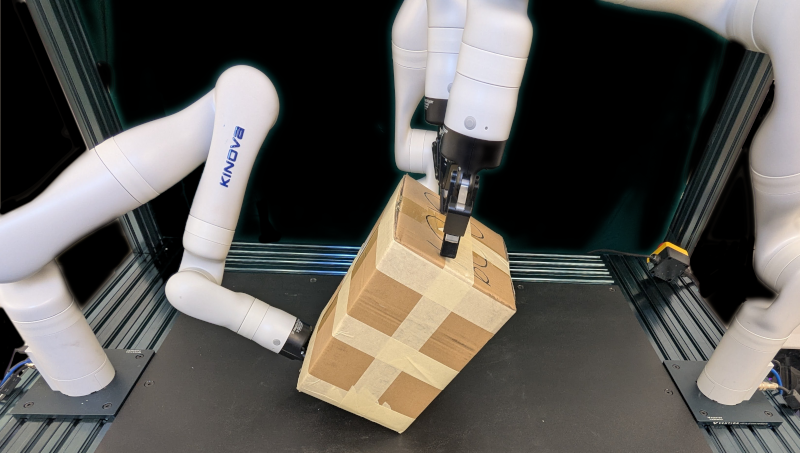}
    \end{minipage}
    \begin{minipage}{0.3\linewidth}
        \includegraphics[width=\linewidth]{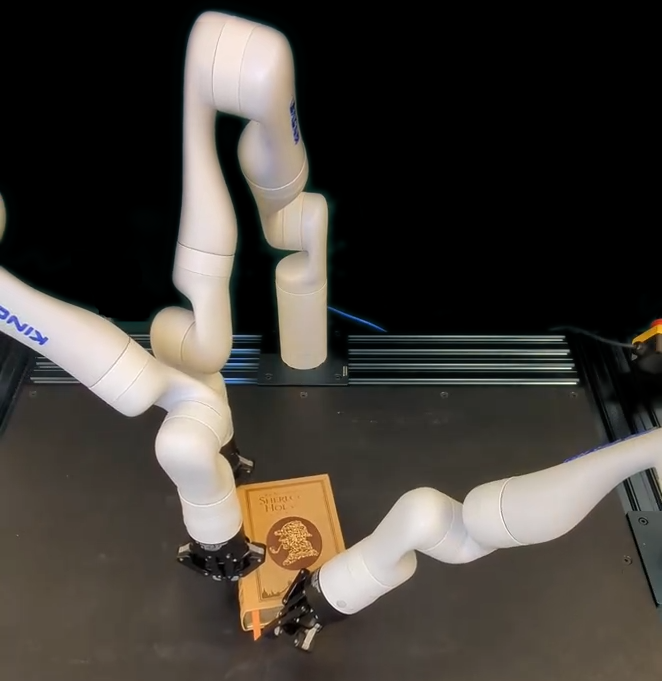}
    \end{minipage}

    \caption{Policy outputs often specify only end-effector trajectories, leaving a gap to determine arm--motion assignments and to reason over redundancy for collision-free embodiment. \textbf{Left:} visualizing the redundancy manifold $\mathcal{M}$ with multiple configurations $\robcfg{}{}$, some safe and some in collision, all with the same end-effector pose $x \in SE(3)$, i.e., $FK(\robcfg{}{}) = x$. \textbf{Right:} three arms collaboratively flip or push.}
    \label{fig:teaser}
    \vspace{-0.6cm}
\end{figure}

%% file: LaTeX/background.tex
\section{Background}
Let us begin by formally defining the problem at hand, reviewing relevant fundamentals, and surveying related work.

\subsection{On-Manifold Anonymous Multi-Robot-Arm Motion Planning (OM-AMRAMP)}
\label{sec:problem_formulation}

We consider $N$ manipulators $\{\robot{i}\}_{i=1}^N$, each with $d$ joints (also termed DOF: degrees of freedom) and configuration space $\robcfgspace{i} \subset \mathbb{R}^d$. A configuration $\robcfg{i}{} \in \robcfgspace{i}$ is a choice of joint angles for $\robot{i}$, which determines the pose of its end-effector (i.e., ``hand'') via the forward-kinematics map $FK^i : \robcfgspace{i} \rightarrow SE(3)$ and the volume it occupies in the workspace, denoted $\robot{i}(\robcfg{i}{}) \subset \mathbb{R}^3$. We denote the (static) obstacle region with  $\mathcal{O} \subset \mathbb{R}^3$.

The input to OM-AMRAMP is a set of $N$ end-effector trajectories of $H$ steps,
$X^j = \{x^j_1, \dots, x^j_H\}$ for $j = 1,\dots,N$, with $x^j_t \in SE(3)$, which must be realized by the arms. We seek a \textit{feasible} solution--a bijection $\sigma : \{1,\dots,N\} \to \{1,\dots,N\}$ from robots to trajectories and configuration-space trajectories
$\robtraj{i}{} = \{\robtraj{i}{1}, \dots, \robtraj{i}{H}\}$ with $\robtraj{i}{t} \in \robcfgspace{i}$, such that for all robots $\robot{i}$, timesteps $t$, and all $k \neq i$, each arm exactly tracks a trajectory without colliding. I.e.,
\[
FK^i(\robtraj{i}{t}) = x^{\sigma(i)}_t,\quad
\robot{i}(\robtraj{i}{t}) \cap \robot{k}(\robtraj{k}{t}) = \emptyset,\;
\robot{i}(\robtraj{i}{t}) \cap \mathcal{O} = \emptyset.
\]
We leave reaching $\robtraj{i}{1}$ to standard multi-arm motion planners \cite{mishani2025srmp, shaoul2024accelerating, shome2020drrt, huang2025apex-mr, RRT-Connect}. 

\subsection{Fundamentals and Related Work}
\label{sec:fundamentals}

The prevailing approach to embodying single-hand manipulation trajectories on a robotic arm is via inverse kinematics (IK) tracking \cite{yao2005ATACE}. This process typically begins by computing a first satisfying configuration $\robtraj{}{1}$ where $FK(\robtraj{}{1}) = x_1$, and then proceeding to find the remaining $\robtraj{}{t}$ such tha $FK(\robtraj{}{t}) = x_t$, often attempting to minimize $|\robtraj{}{t} - \robtraj{}{t-1}|$. Solving the IK problem, where we seek a configuration $\robcfg{}{}$ whose end-effector pose matches a desired $x \in SE(3)$, is often done numerically via the geometric Jacobian $J(\robcfg{}{})$, which relates joint velocities to end-effector velocities through $\dot{x} = J(\robcfg{}{})\robcfgvel{}{}$. Given an initial guess $\robcfg{}{\text{seed}}$ with pose error $\Delta x = x - FK(\robcfg{}{\text{seed}})$, a local IK update is obtained by $\Delta \robcfg{}{} = J^{+}(\robcfg{}{\text{seed}})\,\Delta x$, where $J^{+}$ is the pseudoinverse. Iterating $\robcfg{}{\text{seed}} \leftarrow \robcfg{}{\text{seed}} + \Delta \robcfg{}{}$ drives the end-effector toward the target. This process, which we denote $IK(x, \robcfg{}{\text{seed}})$, is used for tracking by attempting random seeds $\robcfg{}{\text{rand}}$ until finding $\robtraj{}{1} \gets IK(x_1, \robcfg{}{\text{rand}})$ and then $\robtraj{}{t} \gets IK(x_t, \robtraj{}{t-1})$.

For common manipulators with more than six joints, $FK(\robcfg{}{})= x_t$ admits infinitely many solutions (Fig. \ref{fig:teaser}), forming the low-dimensional manifold
$\mathcal{M}_t = \{\, \robcfg{}{} \in \robcfgspace{} \mid FK(\robcfg{}{}) = x_t \,\}$.
Because numerical IK returns only a \emph{single} solution determined by its seed, IK-tracking commits early to one trajectory solution and cannot explore configurations on $\mathcal{M}_t$ that might avoid collisions later in execution.

Advanced single-robot pose-constrained planners explore $\mathcal{M}_t$ more broadly, \edit{e.g., by sampling multiple IK solutions per pose and searching over their connections \cite{de2017descartes}, by projection-based sampling on the manifold \cite{stilman2007rrt-tc, berenson2009cbirrt, oriolo2005motion}, or by systematically searching redundant joint values \cite{cohen2014single_and_dual_arm}.} These methods improve upon IK-tracking, but they still do not address assignments and coordination jointly.


For redundant manipulators, a general way to explore $\mathcal{M}_t$ arises from the null space of the Jacobian. At a configuration $\robcfg{}{}$, the null space $\mathrm{Null}(J(\robcfg{}{}))$ consists of all joint-space directions that leave the end-effector pose unchanged.
I.e., with $N(\robcfg{}{})$, a matrix whose columns span $\mathrm{Null}(J(\robcfg{}{}))$, 
any update of the form $\Delta \robcfg{}{} = N(\robcfg{}{})\,\alpha$, where $\alpha$ is a small coefficient vector, moves the robot locally without altering its end-effector pose. These null-space directions provide systematic ``near neighbors’’ on the constraint manifold and allow local exploration of redundant configurations. In this work, we adopt this null-space-based exploration to generate feasible on-manifold motions while simultaneously reasoning about trajectory–arm assignments and inter-arm collisions.

%% file: LaTeX/method.tex
\section{On-Manifold CBS with Assignments}

We now introduce our method, \emph{On-Manifold Conflict-Based Search with Assignments} (\omcbsa, pronounced ``Om" CBSA). At its core, \omcbsa relies on a single-robot planner, \omastar, which performs search directly on the pose-constraint manifolds induced by a desired end-effector trajectory. We first present \omastarsingle for the single-trajectory setting and then extend it to a multi-goal variant that allows a robot to implicitly solve the assignment problem by selecting which trajectory to embody during search. Our main multi-robot algorithm, \omcbsa, builds on this multi-goal \omastar to jointly reason about trajectory--arm assignments and inter-arm collisions. \edit{We also introduce two prioritized planners (PP), OM-PP-A$\!^\star_\text{  single}$ and OM-PP-A$\!^\star$, using the single-goal and multi-goal variants of \omastar in a prioritized scheme \cite{erdmann1987prioritizedplanning}.}

\subsubsection{On-Manifold A$\!^\star$.}

\omastarsingle is a single-robot, single-hand-trajectory planner that performs A$\!^\star$ search \cite{a*} directly on the pose-constraint manifold induced by an end-effector path $X = \{x_1,\dots,x_H\}$. The algorithm maintains an \textsc{Open} list (a priority queue) of states $s = (\robcfg{}{},t)$, where $\robcfg{}{}$ is a configuration with $FK(\robcfg{}{}) = x_t$. \textsc{Open} is initialized with multiple IK solutions for $x_1$, obtained by running IK from diverse seeds to roughly sample the manifold $\mathcal{M}_1 = \{\robcfg{}{} \mid FK(\robcfg{}{})=x_1\}$. Each state stores a cost-to-come $g(s)$, a heuristic $h(s) = H - t$, a collision count $c(s)$ with respect to any provided other-robot paths, and a priority value $f(s) = g(s) + w_h h(s) + w_c c(s)$ as in \citet{weightedCBS}. At each step, \omastarsingle pops from \textsc{Open} the state with minimal $f(s)$; if this state is at $t = H$, the algorithm terminates and returns the corresponding on-manifold path. Otherwise, from a popped state $(\robcfg{}{},t)$, it generates successors $s'$ for $x_{t+1}$ by first computing a nominal projection $\robcfg{}{}{}' = IK(x_{t+1}, \robcfg{}{})$, and then exploring redundant realizations via nullspace motion: letting $N(\robcfg{}{}{}')$ denote a basis of $\mathrm{Null}(J(\robcfg{}{}{}'))$, for each basis direction $n \in N(\robcfg{}{}{}')$ and step size $\pm \epsilon$, the algorithm proposes $\robcfg{}{}{}'' = \robcfg{}{}{}' + \epsilon n$ followed by a correction step $\robcfg{}{}{}'' \leftarrow IK(x_{t+1}, \robcfg{}{}{}'')$ to return to the manifold. Each valid $\robcfg{}{}{}''$ (collision-free with obstacles and within joint limits) becomes a successor state $(\robcfg{}{}{}'', t+1)$ with updated\footnote{For initial states $g(s) = \Vert FK(q) - x_1\Vert$ for $q$ the current robot configuration. $c(s') = c(s) + \textsc{CountCollisions}(q')$.} $g(s') = g(s) + \|\robcfg{}{}{}''-\robcfg{}{}\|$, $h$, $c$, and $f$, and is inserted into \textsc{Open}. We set $\epsilon = 0.2, w_h=10, w_c=0.1$.

\subsubsection{On-Manifold Multi-Goal A$\!^\star$.}

To handle the setting where a robot can choose among multiple end-effector trajectories $\{X^1,\dots,X^M\}$, we design \omastar. Here, instead of initializing \textsc{Open} with IK solutions for the first pose of a single trajectory, we initialize it with IK solutions for the first pose of \emph{every} candidate trajectory. Each state now has the form $(q, t, j)$, encoding a timed configuration tracking $X^j$, and the search proceeds exactly as in \omastarsingle, with the only difference that successors $s' = (q', t+1, j)$ generated along  $X^j$. This has assignment decisions handled implicitly,
allowing the planner to determine both \emph{which} trajectory to embody and \emph{how} to track it without committing prematurely.

\subsubsection{On-Manifold Prioritized Planning.}

A simple way to coordinate multiple manipulators is to use \omastar as a low-level planner in a prioritized scheme. Robots are ordered by a priority and planned sequentially: the highest-priority robot plans first, and each subsequent robot runs \omastar (or \omastarsingle, if assignments are available) while treating previously planned trajectories as moving obstacles, disallowing configurations $q^i_t$ for which $\robot{i}(q^i_t) \cap \robot{k}(\robtraj{k}{t}) \neq \emptyset$ for some higher-priority $\robot{k}$. We refer to the resulting planners as OM-PP-A$\!^\star$ and OM-PP-A$\!^\star_{\text{   single}}$. These on-manifold prioritized planners are efficient and often effective in practice, but—as in standard prioritized schemes—are incomplete.

\subsubsection{On-Manifold Conflict-Based Search with Assignments.}

To obtain a complete multi-robot planner for pose-constrained policy embodiment, we integrate our multi-goal \omastar with Conflict-Based Search (CBS)~\cite{sharon2015conflict}. \omcbsa begins by planning an on-manifold trajectory for each robot\footnote{In practice, the root-node construction can be warm-started with repeated IK-tracking, with all later planning using \omastar.}; these initial trajectories may contain two types of conflicts: \emph{assignment conflicts}, in which two robots select the same candidate trajectory $X^j$, and \emph{geometric conflicts}, in which two robots collide at time $t$.

As in CBS, \omcbsa organizes the search in a constraint tree (CT). Each CT node contains (i) a set of per-robot constraints, (ii) the trajectories $\{\robtraj{i}{}\}_{i=1}^N$ computed under those constraints using \omastar, and (iii) the conflicts detected in these trajectories. At each iteration, \omcbsa selects the CT node with the fewest conflicts; if the node has no conflicts, its trajectories are returned as a solution. Otherwise, \omcbsa resolves one conflict by generating two child CT nodes, each created by copying the parent node and adding a different constraint. Each child then replans one robot, the robot to which the new constraint applies, using \omastar.

When an assignment conflict is encountered (e.g., robots $\robot{i}$ and $\robot{k}$ both select trajectory $X^j$) \omcbsa adds \emph{assignment constraints} (inspired by~\citet{motes2020tmpcbs}). One child forbids $\robot{i}$ from selecting $X^j$, and the other forbids $\robot{k}$. In the corresponding multi-goal \omastar replanning call, this is enforced by omitting all IK initializations for the forbidden trajectory when constructing the \textsc{Open} list.

If no assignment conflicts remain but a geometric conflict exists—say robots $\robot{i}$ and $\robot{k}$ collide at time $t$—\omcbsa selects a collision point $p \in \robot{i}(\robtraj{i}{t}) \cap \robot{k}(\robtraj{k}{t})$ and creates two child CT nodes: one forbidding robot $\robot{i}$ from occupying $p$ at time $t$, and one forbidding robot $\robot{k}$ from the same. In practice, during low-level validation, a state $(q,t)$ violates such a constraint if the occupied volume of the robot at timestep $t$ contains the witness point $p$. This is the same constraint model used in \cite{shaoul2024gencbs}

By combining CBS’s high-level reasoning over assignment and geometric constraints with low-level on-manifold feasibility checks, \omcbsa provides a complete planner for pose-constrained multi-arm policy embodiment.

\begin{figure*}[t]
    \centering
    \includegraphics[height=2.8cm,valign=t]{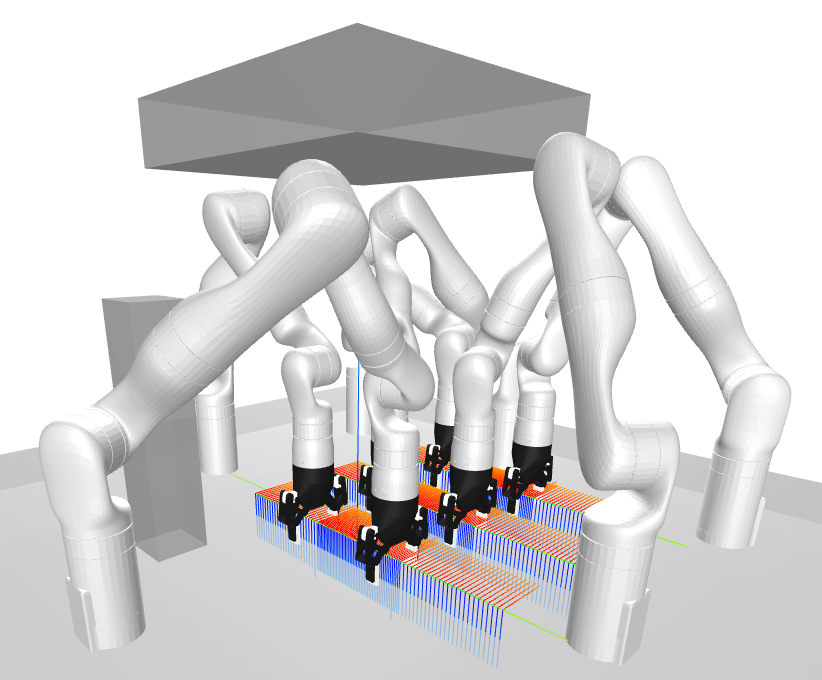}
    \includegraphics[height=2.8cm,valign=t]{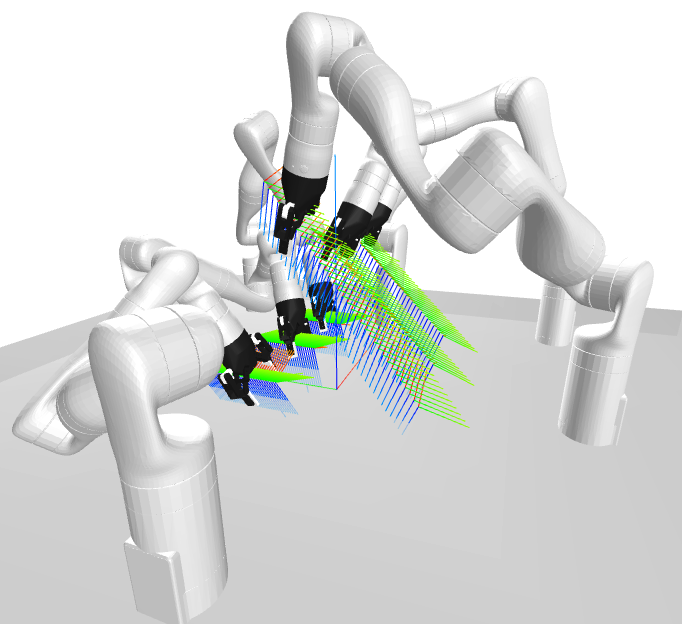}
    \includegraphics[width=0.6\linewidth,valign=t]{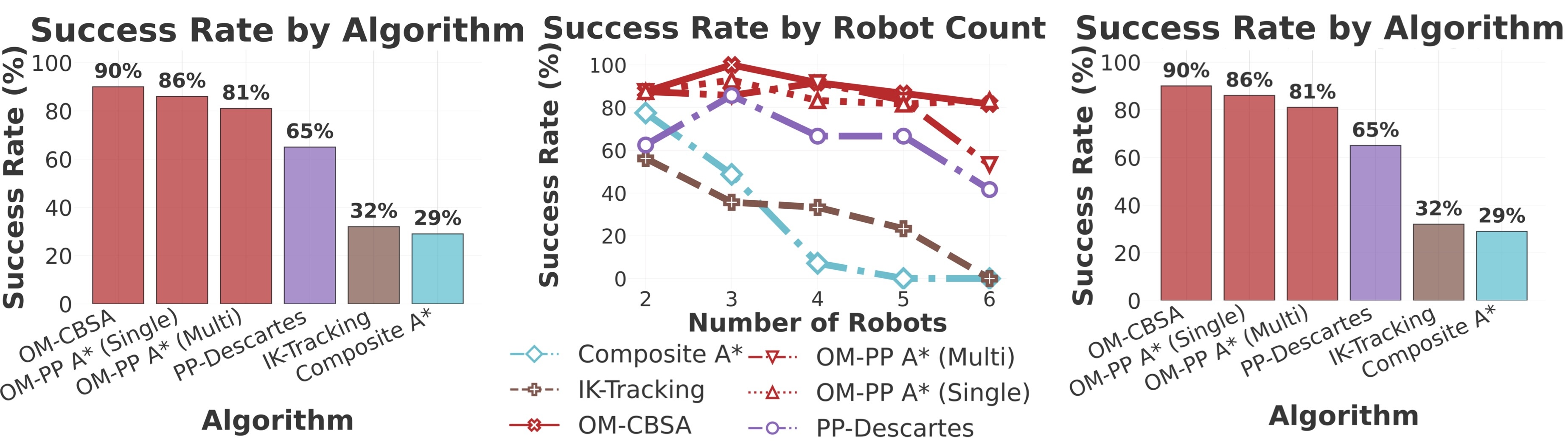}
    
    \caption{\textbf{Left images:} illustrations of our test cases visualizing six robots and six end-effector trajectories. The left image shows all robots moving from right to left in a constrained space, and the right image shows six robots performing a box-flipping motion. \textbf{From middle to the right:} overall success rates, success rates by robot count, and average overall runtime.}
    \label{fig:exps}
\end{figure*}

\subsection{Theoretical Analysis}
As is customary for search-based planning algorithms, \omcbsa adopts a finite-resolution discrete-time abstraction of the underlying continous problem: low-level motion is generated with a finite set of motion primitives. This abstraction allows us to establish the \textit{resolution}-completeness of \omcbsa for OM-AMRAMP. We do so by first analyzing \omastar, showing that it is resolution-complete, and then showing that its integration into a CBS framework with assignment and geometric constraints maintains this guarantee.

\begin{lemma}[Resolution-Completeness of \omastar] \label{lem:omastar}
If a feasible pose-constrained trajectory $\robtraj{i}{}$ following a fixed end-effector path $X^j \in \{X^1, \dots X^N\}$ exists for a $\robot{i}$ under motion primitive discretization resolution, then \omastar will find it.
\end{lemma}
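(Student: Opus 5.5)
We will argue the claim as a standard property of A$\!^\star$ on a finite graph, with the discretization defining the graph. First we make the search space explicit. Fix the motion-primitive resolution, meaning a finite set of IK seeds for each first pose, the $2(d-6)$ null-space primitives $\pm\epsilon n$, and the deterministic correction $IK(x_{t+1},\cdot)$. These define a directed graph $G$. Its vertices are states $(q,t,j)$ reachable from the initial IK solutions. Its edges are the successor relation: from $(q,t,j)$ we generate $IK(x^j_{t+1},q)$, and $IK(x^j_{t+1}, IK(x^j_{t+1},q)+\epsilon n)$ for each basis direction and sign, kept only when valid. The hypothesis ``a feasible trajectory exists under motion primitive discretization resolution'' we interpret as follows: there is a path $s_1=(\tau_1,1,j),\dots,s_H=(\tau_H,H,j)$ in $G$ such that $s_1$ is one of the initial states in \textsc{Open}, each $s_{t+1}$ is a generated successor of $s_t$, and each $s_t$ is valid (within joint limits, obstacle-free, and consistent with any constraints imposed on $R^i$).

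Second, we show $G$ is finite. The time index is bounded by $H$, and the trajectory index by $N$ (or $M$). Each state has at most $1+2(d-6)$ successors, and there are finitely many initial states. So by induction on $t$ the set of states with time index $t$ is finite. Since every edge increases $t$ by exactly one, $G$ is a finite DAG. We would note that the procedures $IK$ and the null-space basis $N(\cdot)$ must be deterministic functions of their inputs. Otherwise the ``graph'' is not well defined. We expect this to be the main obstacle: making precise what ``under resolution'' means, so that the witness path is guaranteed to consist of states the algorithm actually generates. We handle it by defining the resolution-level problem as existence of a path in $G$ itself.

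Third, we run the usual A$\!^\star$ completeness argument. All edge costs $\|q''-q\|$ are nonnegative and the collision counts are nondecreasing, so $f$ values are finite. Because $G$ is finite and acyclic, each state can be inserted only finitely often, even when duplicates are treated as distinct. So \textsc{Open} becomes empty after finitely many expansions unless a goal is popped first. We then show by induction on $t$ that, as long as no goal has been popped, some $s_t$ on the witness path is in \textsc{Open}, or its successor $s_{t+1}$ has already been inserted. The base case holds because $s_1$ is inserted at initialization, which applies when $j$ is not forbidden by an assignment constraint. The step holds because expanding $s_t$ generates $s_{t+1}$, which is valid and therefore inserted. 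Hence \textsc{Open} cannot empty before some state with $t=H$ is popped. That means the algorithm terminates returning an on-manifold path following some $X^{j'}$ whenever the witness exists. This is the required completeness: the algorithm returns a feasible trajectory rather than necessarily the same one.

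Finally, we would note one point about the geometric constraints. If collisions with other robots are treated as hard constraints, they only prune vertices, so the witness path (valid by assumption) survives. If they are soft, entering only through $w_c c(s)$, they affect only the ordering, which the finiteness argument does not depend on.
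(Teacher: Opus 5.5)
Your proposal is correct and takes the same route as the paper's proof. The paper argues that the search space induced by the initialization set and the motion primitives is a finite graph of triples $(q,t,j)$, and that systematic search over it must reach a goal whenever a witness path from an initialized state exists. You spell this out in more detail: IK must be deterministic, finiteness follows from bounded depth and bounded branching, you state the \textsc{Open}-list invariant, and you correctly note that A$\!^\star$ returns some feasible path rather than necessarily the witness.

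One small slip: at kinematic singularities $\mathrm{Null}(J)$ can have dimension larger than $d-6$. The branching bound should therefore be $1+2d$ rather than $1+2(d-6)$, which does not affect finiteness.
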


\begin{proof}

\omastar searches a finite graph whose nodes are triples $(q, t, j)$ with $t \in \{1,\dots,H\}$, $q$ drawn from the finite set of configurations generated by the initialization set and motion primitives, and
$j$ a trajectory index. Since \omastar systematically explores this finite graph, every reachable node will eventually be expanded. Therefore, if a feasible trajectory exists in this induced graph and starts from one of the initialized states, \omastar will eventually reach and return it.
\end{proof}

For a CBS-based planner to be complete, its single-robot planner must be complete, and the constraints it uses to resolve conflicts must be \emph{mutually disjunctive}: for any pair of constraints $c^1, c^2$ generated from a single conflict between $\robot{1}$ and $\robot{2}$, there does not exist a conflict-free joint solution in which both $c^1$ and $c^2$ are violated~\cite{li2019largeAgents}.

\begin{lemma}[Constraints in \omcbsa] \label{lem:constraints}
The assignment and geometric point constraints are mutually disjunctive.
\end{lemma}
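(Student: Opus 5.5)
We argue by cases on the conflict type that produced the constraint pair, using the definition of mutual disjunctiveness just given: for a pair $c^1, c^2$ generated from one conflict between $\robot{i}$ and $\robot{k}$, we must show that every conflict-free joint solution satisfies at least one of them. Equivalently, we assume a joint solution violates both constraints and show that it must contain a conflict. Here a joint solution means an assignment $\sigma$ together with trajectories $\{\robtraj{i}{}\}$, and conflict-free means $\sigma$ is a bijection and no two arms intersect at any timestep.

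\emph{Assignment constraints.} Suppose $\robot{i}$ and $\robot{k}$ both selected $X^j$. Then $c^1$ forbids $\robot{i}$ from selecting $X^j$, and $c^2$ forbids $\robot{k}$ from selecting $X^j$. A joint solution violating both has $\sigma(i) = j = \sigma(k)$ with $i \neq k$. This contradicts the requirement that $\sigma$ be a bijection, so the solution has an assignment conflict. Hence every conflict-free solution satisfies at least one child's constraint.

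\emph{Geometric point constraints.} Suppose $\robot{i}$ and $\robot{k}$ collide at timestep $t$, and let the witness be $p \in \robot{i}(\robtraj{i}{t}) \cap \robot{k}(\robtraj{k}{t})$. Then $c^1$ forbids $p \in \robot{i}(\cdot)$ at time $t$, and $c^2$ forbids $p \in \robot{k}(\cdot)$ at time $t$. Take any joint solution $\{\hat\tau^i\}$ violating both, so that $p \in \robot{i}(\hat\tau^i_t)$ and $p \in \robot{k}(\hat\tau^k_t)$. Then $p \in \robot{i}(\hat\tau^i_t)\cap\robot{k}(\hat\tau^k_t) \neq \emptyset$, which violates the inter-arm collision condition of OM-AMRAMP at timestep $t$. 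So the solution is not conflict-free. This holds regardless of which trajectories the two robots are assigned to, because the constraint refers only to occupied volume at time $t$.

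The main subtlety is making sure the constraints are interpreted exactly as the low-level planner enforces them. First, the constraint must be tied to the same timestep $t$, which holds because the problem is discrete-time and synchronized. Second, the occupancy test must use the same closed volumes $\robot{i}(q)$ as the collision test, so that "contains $p$" and "intersects" are consistent. Third, an assignment constraint must be checked against the selected index $j$ in the state $(q,t,j)$, which is what omitting the initializations for $X^j$ in \omastar achieves. Once these are checked, both cases reduce directly to the feasibility conditions in Section~\ref{sec:problem_formulation}, and the lemma follows.
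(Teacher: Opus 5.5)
Your proof is correct and matches the paper's. The assignment case is identical: violating both constraints forces $\sigma(i)=\sigma(k)$, which contradicts bijectivity. The only difference is in the geometric case, where the paper cites prior work and you give the short direct argument: a shared witness point $p$ at timestep $t$ makes the two occupied volumes intersect, so no conflict-free solution can violate both constraints.
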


\begin{proof}
Consider first an assignment conflict where $\robot{1}$ and $\robot{2}$ both select $X^j$, and the two child CT nodes add constraints
$c^1 = \{\text{$\robot{1}$ may not select $X^j$}\}$ and
$c^2 = \{\text{$\robot{2}$ may not select $X^j$}\}$.
Any joint solution that violates both $c^1$ and $c^2$ necessarily assigns $X^j$ to both $\robot{1}$ and $\robot{2}$. This contradicts the requirement that the assignment $\sigma$ in OM-AMRAMP is a bijection between robots and trajectories, and therefore such a solution is invalid and cannot be conflict-free. Hence the assignment constraints are mutually disjunctive.
Geometric point constraints are mutually disjunctive, as shown in \citet{shaoul2024gencbs}.
\end{proof}

\begin{theorem}[Resolution-Completeness of \omcbsa]
\omcbsa is resolution-complete for OM-AMRAMP.
\end{theorem}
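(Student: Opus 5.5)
The plan is the standard CBS completeness argument, adapted to the assignment setting and to the finite resolution-level search space. Fix a resolution and suppose a feasible OM-AMRAMP solution exists in the induced discretized space. That solution consists of a bijection $\sigma$ and per-robot paths $\robtraj{i}{}$ that lie in the graph searched by \omastar, are conflict-free, and respect the bijectivity requirement. Call it $\Pi^\star$.

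\textbf{Step 1: an invariant along the constraint tree.} I will show that at every point of the search, some unexpanded CT node has a constraint set that $\Pi^\star$ satisfies. The root has no constraints, so it qualifies. Now suppose a node $N$ whose constraints $\Pi^\star$ satisfies is expanded on a conflict. By Lemma~\ref{lem:constraints}, the two child constraints $c^1,c^2$ are mutually disjunctive, so $\Pi^\star$ cannot violate both. Hence at least one child $N'$ inherits the property. It remains to check that $N'$ is actually generated with valid paths. In $N'$ only one robot $\robot{i}$ is replanned. Its component of $\Pi^\star$ is a feasible pose-constrained path in the \omastar graph that satisfies all of $\robot{i}$'s constraints, namely the forbidden trajectory indices and the forbidden witness points. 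By Lemma~\ref{lem:omastar}, applied to the constrained graph, the low-level call returns some path. This is important because a failed low-level call would otherwise discard the child.

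\textbf{Step 2: the search terminates with a solution.} Each CT expansion adds a new constraint. Only finitely many constraints are possible: there are $N\cdot N$ assignment constraints, and geometric constraints are indexed by robot, time $t\le H$, and witness point. There are finitely many robot--time pairs, but the witness points need more care, and I expect this to be the main obstacle. The set of configurations in the \omastar graph is finite, so the set of possible collision volumes is finite. If witness points are chosen canonically, for instance by a deterministic rule applied to the pair of colliding volumes, then the set of possible witness points is finite. Two further facts are needed. A constraint already present in a node is never regenerated on that branch, because the replanned path satisfies it, so the conflict that produced it cannot recur. It follows that every branch of the CT has bounded depth, and the CT is finite.

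\textbf{Step 3: conclude.} CBS expands nodes in a systematic best-first order, here ordered by number of conflicts. Since the CT is finite, it cannot run forever. By Step 1 the tree always contains a node consistent with $\Pi^\star$, so the open CT list never empties before a solution is found. The only way to terminate is therefore to reach a conflict-free node. In such a node every assignment conflict has been eliminated, so the assignment is injective and hence a bijection, and no geometric collisions remain. Its trajectories are therefore a feasible OM-AMRAMP solution at the given resolution, which proves resolution-completeness.
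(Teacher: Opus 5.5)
Your proof is correct, but it takes a more explicit route than the paper. The paper's proof is a one-line reduction. It plugs Lemma~\ref{lem:omastar} and Lemma~\ref{lem:constraints} into the known result that CBS with a complete low-level planner and mutually disjunctive constraints is complete, and it does not argue termination itself.

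You instead reprove that result inside \omcbsa. Your Step~1 uses the two lemmas exactly as the paper intends: disjunctiveness keeps a $\Pi^\star$-consistent CT node open, and low-level completeness on the constrained subgraph guarantees that child is actually generated. Your Steps~2--3 then add a finite-constraint-tree argument, plus the check that a node with no assignment conflicts gives an injective, hence bijective, $\sigma$.

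This extra work is useful. The standard CBS completeness argument gets termination from best-first expansion by solution cost, which is monotone along branches. \omcbsa instead expands the node with the fewest conflicts, which is not monotone. Your finiteness argument is what justifies completeness under that greedy rule. The paper's citation is shorter and modular, but it glosses over this point.

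One refinement: your canonical-witness assumption is unnecessary. The key fact is that in every CT node, all paths satisfy all of that node's constraints. This holds because only the newly constrained robot is replanned, and the other robots' constraint sets are unchanged. This is also the right justification for your claim that constraints are never regenerated, rather than appealing only to the replanned path.

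It follows that a new point constraint on $(\robot{i},t)$ is always generated from the configuration $\robot{i}$ currently occupies at $t$. That configuration is then forbidden at $t$ throughout the subtree. So the number of point constraints on $(\robot{i},t)$ along any branch is at most the number of configurations available to $\robot{i}$ at time $t$, however $p$ is chosen. Those configurations are the ones in the finite \omastar graph, plus those on the warm-started root paths. This bounds the depth of the constraint tree directly, with no assumption on how witness points are selected.
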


\begin{proof}
By Lemma \ref{lem:constraints}, all constraints added by \omcbsa are mutually disjunctive, and by Lemma \ref{lem:omastar}, the low-level planner \omastar is resolution-complete. Since CBS with a complete low-level planner and mutually disjunctive constraints is  complete, \omcbsa is resolution-complete.
\end{proof}

This establishes that \omcbsa fully reasons over assignments, redundant-joint realizations, and inter-arm collisions, and will succeed whenever a valid embodiment exists.

%% file: LaTeX/experiments.tex
\section{Experimental Analysis}

We simulated 450 benchmark problems to stress assignment reasoning, redundant-joint coordination, and collision avoidance. As illustrated in Fig.~\ref{fig:exps}, each problem comprised \edit{up to six} end-effector trajectories and robots arranged in diverse layouts with obstacles. The test suite included motions resembling learned manipulation-policy outputs and humanoid-inspired setups requiring tightly coordinated execution~\cite{black2024pizero,shaoul2025gco}. We also demonstrated \omcbsa on a physical 3-arm setup performing cloth rotation, box flip, and planar pushing (Fig.~\ref{fig:teaser}).

\subsection{Baselines}
To the best of our knowledge, no algorithm directly tackles the OM-AMRAMP problem. As such, we compared \omcbsa\ to three families of methods adapted from existing work: IK-tracking, pose-constrained roadmaps, and multi-robot implicit graph search.

\paragraph{IK-Tracking.}
This baseline reflects the standard practical workflow. Assuming a given assignment $\sigma$, each robot samples an IK solution for the first pose on their associated $X^j$ and tracks the remainder of the trajectory using IK. Our implementation iterates over all $\sigma$, proceeding to the next assignment if one fails. This approach is efficient but myopic.

\paragraph{PP-Descartes.}
We adapt Descartes~\cite{de2017descartes} to multi-robot settings. As with IK-Tracking, we enumerate all robot–trajectory assignments, and for each assignment we iterate over all priority orderings. Robots then plan in priority order: each constructs a pose-constrained roadmap by sampling IK solutions and connecting them with collision-free edges, requiring all samples and edges to also avoid the already planned higher-priority robots' trajectories. If all robots successfully build and search their roadmaps, a joint solution is returned; otherwise, the method attempts the next ordering or assignment. 

\paragraph{Composite A$\!^\star$.}
Inspired by search on implicit pose-constrained graphs~\cite{cohen2014single_and_dual_arm}, we implement a ``composite'' A$\!^\star$ planner whose states are joint tuples $s = (q^1,\dots,q^N,t)$. For each robot $\robot{i}$ in state $s$, we define its single-robot successor set $\mathcal{S}^i$ exactly as in \omastar, i.e., using an IK step and null-space exploration to move from time $t$ to $t+1$ along its assigned trajectory. The successor set of $s$ is then the Cartesian product of $\{\mathcal{S}^i\}_{i=1}^N$ at time $t+1$, restricted to collision-free joint configurations. The \textsc{Open} list is initialized with IK seeds for the first pose of all end-effector trajectories, implicitly covering all assignments. While expressive, this search now reasons over a state space that grows exponentially with the number of robots.

\subsection{Experimental Results}

We focused our analysis on success rate (fraction of problems solved within 5 seconds) and planning time.
Across all 450 benchmark problems, our proposed planners consistently outperformed all baselines in success rate. As shown in Fig.~\ref{fig:exps}, \omcbsa achieves the highest overall success, followed closely by its prioritized variants OM-PP-A$^{\!\star}$ and OM-PP-A$^{\!\star}_{\text{single}}$. We attribute this to the combination of lightweight single-robot planning over redundant configurations and flexible reasoning over trajectory--arm assignments. When disaggregated by robot count (Fig.~\ref{fig:exps}, second from right), the scalability gap becomes clearer: as the number of arms increases, the proposed methods remain consistently more successful than all baselines. While performing generally well, we observed that Descartes' performance notably suffers due to the cost of roadmap construction. Composite A$\!^\star$ suffers from rapid growth in branching factor as assignments are introduced, making search increasingly difficult. IK-Tracking, while simple and fast when it succeeds, is highly myopic and rarely found feasible solutions.

Our proposed planners remain fast: \omcbsa solved two-robot problems in $150$\,ms on average, and across all benchmarks achieved \edit{a mean runtime of roughly $640$\ ($\sigma = 990$\,ms, min $16$\,ms, max $4.9$\,s}, Fig.~\ref{fig:exps}, rightmost plot). Our prioritized variants solved most problems in under one second. \edit{Among baselines, IK-Tracking was the fastest when it succeeded (about $160$\,ms on average), but its low success rate limits its practical utility. Overall, these results demonstrate that \omcbsa can solve more difficult planning problems than baselines, with the prioritized variants offering favorable speed--performance tradeoffs.}

\paragraph{Implementation details.}
\edit{We used Pinocchio~\cite{carpentier2019pinocchio} for kinematics, Kinova Gen3 arms, and an Intel Core i9-12900H CPU (5.2\,GHz).} Within \omcbsa, root CT nodes were initialized using repeated IK-tracking (capped at $500$\,ms), producing initial trajectories with \edit{fewer} collisions without affecting theoretical guarantees.

%% file: LaTeX/conclusion.tex
\section{Conclusion}
We introduced \omcbsa, a theoretically complete planner for embodying multi-hand manipulation policies on multi-arm robots. By combining on-manifold single-robot planning with a CBS-based coordination scheme, our method systematically reasons about redundancy, collisions, and assignments. Experiments with up to six arms show that \omcbsa and its prioritized variants achieve higher success rates than IK, sampling-based, and implicit graph-search baselines while maintaining real-time performance. We hope this work will lower the practical barriers associated with deploying multi-hand manipulation policies.